\newif\ifIcassp
\newtheorem{theorem}{Theorem}
\newtheorem{lemma}{Lemma}
\acrodef{cp}[CP]{Conformal prediction}
\acrodef{spci}[SPCI]{sequential predictive conformal inference}
\acrodef{uq}[UQ]{uncertainty quantification}
\acrodef{daps}[DAPS]{Diffused adaptive prediction sets}
\acrodef{enbpi}[EnbPI]{sequential distribution-free ensemble batch
prediction intervals}
\acrodef{gnn-rnn}[GNN-RNN]{graph neural networks with recurrent neural networks}
\acrodef{gnn}[GNNs]{Graph neural networks}
\acrodef{rnn}[RNNs]{Recurrent neural networks}
\acrodef{confgnn}[ConfGNN]{conformalized graph neural network}
\acrodef{dcrnn}[DCRNN]{diffusion convolutional
recurrent neural network}
\acrodef{gconvgru}[GConvGRU]{Chebyshev graph convolutional gated recurrent unit cell}
\acrodef{gnnxmer}[GTs]{graph transformers}
\acrodef{iid}[i.i.d]{independent and identically distributed}
\def\cast{{
   \mathord{
      \hbox to 0em{
         \ooalign{
	   \smash{\hbox{$\ast$}}\crcr
	   \smash{\hskip-1pt\Large\hbox{$\circ$}} }
	 \hidewidth}
      \phantom{\bigcirc}
} }}
\def\bm#1{\mbox{\boldmath $#1$}}
\newcommand{\rT}{^{\top}}
    \newtheorem{theorem}{Theorem}[section]
    \newtheorem{lemma}[theorem]{Lemma}
    \newtheorem{prop}{Proposition}[section]
    \newtheorem{claim}{Claim}[section]
    \newtheorem{definition}{Definition}[section]
    \newtheorem{question}{Question}[section]
    \newtheorem{coro}{Corollary}[section]
\newcommand{\bds}{\begin {itemize}}
\newcommand{\eds}{\end {itemize}}
\newcommand{\bdf}{\begin{definition}}
\newcommand{\blm}{\begin{lemma}}
\newcommand{\edf}{\end{definition}}
\newcommand{\elm}{\end{lemma}}
\newcommand{\bthm}{\begin{theorem}}
\newcommand{\ethm}{\end{theorem}}
\newcommand{\bprp}{\begin{prop}}
\newcommand{\eprp}{\end{prop}}
\newcommand{\bcl}{\begin{claim}}
\newcommand{\ecl}{\end{claim}}
\newcommand{\bcr}{\begin{coro}}
\newcommand{\ecr}{\end{coro}}
\newcommand{\bquest}{\begin{question}}
\newcommand{\equest}{\end{question}}
\newcommand{\larrow}{{\larrow}}
\newcommand{\argmin}{\ensuremath{\mathrm{arg}\min}}
\newcommand{\argmax}{\ensuremath{\mathrm{arg}\max}}
\newcommand{\cB}{{\ensuremath{\mathcal{B}}}}
\newcommand{\cC}{{\ensuremath{\mathcal{C}}}}
\newcommand{\cE}{{\ensuremath{\mathcal{E}}}}
\newcommand{\cG}{{\ensuremath{\mathcal{G}}}}
\newcommand{\cM}{{\ensuremath{\mathcal{M}}}}
\newcommand{\cV}{{\ensuremath{\mathcal{V}}}}
\newcommand{\ve}{{\ensuremath{{\mathbf{e}}}}}
\newcommand{\mA}{{\ensuremath{\mathbf{A}}}}
\newcommand{\mD}{{\ensuremath{\mathbf{D}}}}
\newcommand{\mH}{{\ensuremath{\mathbf{H}}}}
\newcommand{\mI}{{\ensuremath{\mathbf{I}}}}
\def\IC{\mathbb C}
\def\IN{\mathbb N}
\def\IZ{\mathbb Z}
\def\IR{\mathbb R}
    \def\IC{{
       \mathord{
	  \hbox to 0em{
	     \hskip-4pt
	     \ooalign{
	       \smash{\hskip1.9pt\raise2.6pt\hbox{$\scriptscriptstyle |$}}\crcr
	       \smash{\hbox{\rm\sf C}} }
	     \hidewidth}
	  \phantom{\hbox{\rm\sf C}}
    } }}
    \def\IN{
	{\ooalign{
       \smash{\hskip2.2pt\raise1.5pt\hbox{$\scriptscriptstyle |$}}
       \vphantom{}\crcr
       \hbox{\rm\sf N}
	    }}
	    } 
    \def\IZ{
       {\ooalign{
       \smash{\hskip1.9pt\raise0pt\hbox{\rm\sf Z}}
       \vphantom{}\crcr
       \hbox{\rm\sf Z}
	    }}
	    } 
    \def\IR{
	{\ooalign{
       \smash{\hskip2.2pt\raise1.5pt\hbox{$\scriptscriptstyle |$}}
       \vphantom{}\crcr
       \smash{\hskip2.2pt\raise3.3pt\hbox{$\scriptscriptstyle |$}}
       \vphantom{}\crcr
       \hbox{\rm\sf R}
	    }}
	    } 
\def\shat{^{\mathchoice{}{}%
 {\,\,\smash{\hbox{\lower4pt\hbox{$\widehat{\null}$}}}}%
 {\,\smash{\hbox{\lower3pt\hbox{$\hat{\null}$}}}}}}
\def\bSigma{{
      \ooalign{
      \smash{\hskip.4pt\raise.4pt\hbox{$\Sigma$}}\vphantom{}\crcr
      \smash{\hskip.7pt\raise.6pt\hbox{$\Sigma$}}\vphantom{}\crcr
      \smash{\hbox{$\Sigma$}}\vphantom{$\Sigma$}}
      \vphantom{\hbox{$\Sigma$}}
      }}
\def\bTheta{{
      \ooalign{
      \smash{\hskip.5pt\raise.5pt\hbox{$\Theta$}}\vphantom{}\crcr
      \smash{\hskip.0pt\raise.1pt\hbox{$\Theta$}}\vphantom{}\crcr
      \smash{\hbox{$\Theta$}}\vphantom{$\Theta$}}
      \vphantom{\hbox{$\Theta$}}
      }}
\def\bDelta{{
      \ooalign{
      \smash{\hskip.4pt\raise.4pt\hbox{$\Delta$}}\vphantom{}\crcr
      \smash{\hskip.7pt\raise.6pt\hbox{$\Delta$}}\vphantom{}\crcr
      \smash{\hbox{$\Delta$}}\vphantom{$\Delta$}}
      \vphantom{\hbox{$\Delta$}}
      }}
\def\bLambda{{
      \ooalign{
      \smash{\hskip.5pt\raise.5pt\hbox{$\Lambda$}}\vphantom{}\crcr
      \smash{\hskip.0pt\raise.1pt\hbox{$\Lambda$}}\vphantom{}\crcr
      \smash{\hbox{$\Lambda$}}\vphantom{$\Lambda$}}
      \vphantom{\hbox{$\Lambda$}}
      }}
\def\bordermatrix#1{\begingroup \m@th
  \@tempdima 8.75\p@
  \setbox\z@\vbox{%
    \def\cr{\crcr\noalign{\kern2\p@\global\let\cr\endline}}%
    \ialign{$##$\hfil\kern2\p@\kern\@tempdima&\thinspace\hfil$##$\hfil
      &&\quad\hfil$##$\hfil\crcr
      \omit\strut\hfil\crcr\noalign{\kern-\baselineskip}%
      #1\crcr\omit\strut\cr}}%
  \setbox\tw@\vbox{\unvcopy\z@\global\setbox\@ne\lastbox}%
  \setbox\tw@\hbox{\unhbox\@ne\unskip\global\setbox\@ne\lastbox}%
  \setbox\tw@\hbox{$\kern\wd\@ne\kern-\@tempdima\left[\kern-\wd\@ne
    \global\setbox\@ne\vbox{\box\@ne\kern2\p@}%
    \vcenter{\kern-\ht\@ne\unvbox\z@\kern-\baselineskip}\,\right]$}%
  \null\;\vbox{\kern\ht\@ne\box\tw@}\endgroup}
\def\argmin{\mathop{\operator@font arg\,min}}
\def\argmax{\mathop{\operator@font arg\,max}}
\def\bm#1{\mbox{\boldmath $#1$}}
\newcommand{\bea}{\begin{array}}
\newcommand{\ena}{\end{array}}
\newcommand{\beq}{\begin{equation}}
\newcommand{\enq}{\end{equation}}
\newcommand{\beqa}{\begin{eqnarray}}
\newcommand{\enqa}{\end{eqnarray}}
\newcommand{\beqan}{\begin{eqnarray*}}
\newcommand{\enqan}{\end{eqnarray*}}
\newcommand{\AL}{\begin{enumerate}}
\newcommand{\ALE}{\end{enumerate}}
\def\addots{\mathinner{
    \mkern1mu\raise0pt\vbox{\kern7pt\hbox{.}}
    \mkern2mu\raise4pt\hbox{.}
    \mkern2mu\raise7pt\hbox{.}
    \mkern1mu}}
\def\sddots{\mathinner{
    \mkern.8mu\raise7pt\hbox{.}
    \mkern.8mu\raise4pt\hbox{.}
    \mkern.8mu\raise0pt\vbox{\kern7pt\hbox{.}}
    \mkern1mu}}
\def\saddots{\mathinner{
    \mkern.2mu\raise0pt\vbox{\kern7pt\hbox{.}}
    \mkern.2mu\raise4pt\hbox{.}
    \mkern.2mu\raise7pt\hbox{.}
    \mkern1mu}}
\newcommand{\bc}{{\ensuremath{\mathbf{c}}}}
\newcommand{\be}{{\ensuremath{\mathbf{e}}}}
\newcommand{\bs}{{\ensuremath{\mathbf{s}}}}
\newcommand{\bx}{{\ensuremath{\mathbf{x}}}}
\newcommand{\by}{{\ensuremath{\mathbf{y}}}}
\newcommand{\bH}{{\ensuremath{\mathbf{H}}}}
\def\sqplus{\mathbin{
	{\ooalign{\hfil\raise.3ex\hbox{\scriptsize
	+}\hfil\crcr\mathhexbox274\crcr\mathhexbox275}}
	}} 
\def\sqminus{\mathbin{
	{\ooalign{\hfil\raise.3ex\hbox{\scriptsize
	--}\hfil\crcr\mathhexbox274\crcr\mathhexbox275}}
	}}
\def\IC{{
   \mathord{
      \hbox to 0em{
	 \hskip-4pt
         \ooalign{
	   \smash{\hskip1.9pt\raise2.6pt\hbox{$\scriptscriptstyle |$}}\crcr
	   \smash{\hbox{\rm\sf C}} }
	 \hidewidth}
      \phantom{\hbox{\rm\sf C}}
} }}
\def\IN{
    {\ooalign{
   \smash{\hskip2.2pt\raise1.5pt\hbox{$\scriptscriptstyle |$}}\vphantom{}\crcr
   \hbox{\sf N}
	}}
	} 
\def\IZ{
    {\ooalign{
   \smash{\hskip1.9pt\raise0pt\hbox{$\sf Z$}}\vphantom{}\crcr
   \hbox{\sf Z}
	}}
	} 
\def\IR{
    {\ooalign{
   \smash{\hskip2.2pt\raise1.5pt\hbox{$\scriptscriptstyle |$}}\vphantom{}\crcr
   \smash{\hskip2.2pt\raise3.3pt\hbox{$\scriptscriptstyle |$}}\vphantom{}\crcr
   \hbox{\sf R}
	}}
	} 
\DeclareMathAlphabet{\mathcmb}{OT1}{cmr}{b}{n}
\def\bSigma{\ensuremath{\mathcmb{\Sigma}}}
\def\bLambda{\ensuremath{\mathcmb{\Lambda}}}
\def\bTheta{\ensuremath{\mathcmb{\Theta}}}
\newcommand{\SI}{\begin{indlist}}
\newcommand{\EI}{\end{indlist}}
\newcommand{\DL}{\begin{dashlist}}
\newcommand{\DLE}{\end{dashlist}}
\def\setboxz@h{\setbox\z@\hbox}
\def\wdz@{\wd\z@}
\def\boxz@{\box\z@}
\def\underset#1#2{\binrel@{#2}%
  \binrel@@{\mathop{\kern\z@#2}\limits_{#1}}}
\def\binrel@#1{\begingroup
  \setboxz@h{\thinmuskip0mu
    \medmuskip\m@ne mu\thickmuskip\@ne mu
    \setbox\tw@\hbox{$#1\m@th$}\kern-\wd\tw@
    ${}#1{}\m@th$}%
  \edef\@tempa{\endgroup\let\noexpand\binrel@@
    \ifdim\wdz@<\z@ \mathbin
    \else\ifdim\wdz@>\z@ \mathrel
    \else \relax\fi\fi}%
  \@tempa
}
\let\binrel@@\relax%
\begin{document}

% Title.
% ------
\title{Conformal Inference for Time Series over Graphs}
\ifIcassp
    \ninept
%%%% for 
%
% Single address.
% ---------------

\name{Sonakshi Dua$^\ddag$, Gonzalo Mateos$^\dag$, and Sundeep Prabhakar Chepuri$^\ddag$ 
}
\address{$^\ddag$Indian Institute of Science, Bangalore, India\\
$^\dag$ University of Rochester, Rochester, NY, USA
}
%%%%%
\else
\author{Sonakshi Dua, Gonzalo Mateos, and Sundeep Prabhakar Chepuri  
\thanks{The authors are with the Department of Electrical Communication Engineering, Indian Institute of Science, Bengaluru, India.}}
\fi % end if

\maketitle

\begin{abstract}
Trustworthy decision making in networked, dynamic environments calls for innovative uncertainty quantification substrates in predictive models for graph time series.
%Graph time series are an important modeling framework in various domains where uncertainty quantification is crucial. 
Existing conformal prediction (CP) methods have been applied separately to multivariate time series and static graphs, but they either ignore the underlying graph topology or neglect temporal dynamics. To bridge this gap, here we develop a CP-based sequential prediction region framework tailored for graph time series. A key technical innovation is to leverage the graph structure and thus capture pairwise dependencies across nodes, while providing user-specified coverage guarantees on the predictive outcomes.  %In this work, we propose a graph-aware sequential prediction region framework tailored for graph time series based on CP, which leverages the graph structure to capture dependencies across nodes while providing user-specified coverage guarantees. 
We formally establish that our scheme yields an exponential shrinkage in
the volume of the ellipsoidal prediction set relative to its graph-agnostic counterpart. %We give theoretical proof of the volume shrinkage through our proposed method. 
Using real-world datasets, we demonstrate that the novel uncertainty quantification framework maintains desired empirical coverage, while achieving markedly smaller (up to 80\%\ reduction) prediction regions than existing approaches.
\end{abstract}

\ifIcassp
%%%
\begin{keywords}
Conformal prediction, Diffusion, Graph time series, Graph filters, Uncertainty quantification
\end{keywords}
\fi % end if
%\vspace*{-1mm}
\maketitle

\section{Introduction} \label{sec:intro} 

%What is this paper about? Is the graph time series super important? Uncertainty quantification as a crucial requirement - CP and variations of doing CP?
%GNN+ RNN or other graph time-series model gives point estimates, but then we must compute confidence intervals.
Graph time series, a time-evolving multivariate signal defined on the nodes of a graph, arise in diverse application domains such as electricity demand forecasting, network anonmaly detection, traffic flow prediction, and epidemic modeling, to name a few. State-of-the-art methods developed for graph time-series forecasting include \ac{gnn-rnn}~\cite{DCRNN, GConvGRU, ASTGCN} or \ac{gnnxmer}~\cite{chen_strucaware_xmer, GTN}. Such neural models generate point predictions and they typically lack \ac{uq}, which is a critical component for trustworthy and robust decision making in complex, dynamic environments. 

%\newline
%What does CP do?
% \Ac{cp} is .... This is again \ac{cp}
% Here, I am citing multiple papers at once~\cite{DCRNN,GConvGRU,ASTGCN}
\ac{cp} has emerged as a widely applicable, distribution-free \ac{uq} paradigm in modern machine learning \cite{pmlr-v60-volkhonskiy17a}. Given a pre-trained predictive model, \ac{cp} %framework constructs 
constructs a model-agnostic wrapper using a calibration dataset and a suitable non-conformity score, to estimate efficient prediction sets %or intervals 
for unseen test data. Interestingly, these prediction sets offer user-prescribed coverage guarantees without imposing any strong distributional assumptions, but require exchangeability between the calibration and test data. Unlike for i.i.d. samples that are customary in statistical learning, the exchangeability assumption is often not tenable for time series and graph data (where sample ordering or node labeling conveys valuable inductive bias).%\vspace{2pt}  

%What are typical methods to achieve CP over graphs, and what are the limitations of these methods that demand the current paper? Why this paper? Motivation
\noindent\textbf{Related Works.} To circumvent said exchangeability requirement,~\cite{tibshirani_CP_covariate_shift_2020} extends the vanilla \ac{cp} to non-exchangeable data through the notion of weighted exchangeability. Further, \ac{spci}~\cite{SPCI} as well as \ac{enbpi}~\cite{EnbPI} have been proposed for time series, wherein prediction intervals are constructed sequentially without using any calibration data. For the multivariate setting,~\cite{Ellipsoidal_SPCI} developed ellipsoidal sets for prediction regions. \ac{cp} methods have been developed for \emph{static} graph data as well. \ac{daps}~\cite{DAPS} advocates diffusing the node-wise conformity scores over the graph, to incorporate neighborhood information. The \ac{confgnn}~\cite{Conf_GNN} provides coverage guarantees for transductive node classification and regression in static graphs. %under the transductive setting 
Exchangability holds for base models that are permutation invariant, which means that the model output/non-conformity score is invariant to the ordering of calibration and test nodes on the graph. However, this does not apply to graph time series, where signals evolve dynamically and the train–test split is determined by temporal ordering rather than by node subset selection.\vspace{2pt} 

\noindent\textbf{Proposed Approach and Contributions.} We study time-evolving graph signals for which neither exchangeability nor permutation invariance hold, thus the need arises for a new framework that integrates \ac{cp} with graph-structured time series prediction pipelines.
%What specifically is the problem that you are trying to solve? How are you solving this?
This paper addresses the gap of constructing valid and sufficiently tight prediction regions for graph time-series forecasting, where both the graph topology and temporal dependencies play a key role. We propose a novel methodology to compute \emph{graph-aware} nonconforming scores, in which the residuals obtained from a base model are first filtered using a graph convolutional operator. Drawing inspiration from~\cite{Ellipsoidal_SPCI}, we form ellipsoids that contain a fraction of the graph-filtered residuals, and a quantile regressor is used to predict the quantiles of the unseen data. At the heart of the proposed approaches lies the homophily assumption, i.e., neighboring nodes tend to have similar residuals. Graph filtering fruitfully incorporates local structural information, %due to which the volume of the prediction ellipsoidal sets is provably significantly smaller volume
which provably yields an exponential shrinkage in the volume of the ellipsoidal prediction set relative to its
graph-agnostic counterpart, without sacrificing coverage.

%bullet point main Contributions of the paper.
%Why are these contributions non-trivial?
%What is the main take-home message?
In summary, the major contributions of the paper are as follows.
\begin{itemize}
    \item We develop a novel \ac{cp} framework for graph time series, which exploits homophily via graph-filtered residual scores.
    \item Leveraging graph structure leads to efficiency. We establish exponential shrinkage in the volume of the ellipsoidal prediction set, while guaranteeing a user-prescribed coverage.
    \item Comprehensive tests using real-world data corroborate that our \ac{uq} scheme attains the target coverage, with markedly smaller ellipsoids than a graph-agnostic \ac{cp} baseline. 
\end{itemize}

% How would you discriminate your paper from other papers? Is that delta significant and non-trivial?%

\section{Preliminaries}\label{sec:prelim}
%$\bX$ for matrix and $\bx$ for vectors and $\cN$ for sets.

\noindent\textbf{Notation}. Throughout the paper, we use lowercase (uppercase) boldface letters to denote vectors (respectively, matrices) and calligraphic letters to denote sets. We denote an ellipsoid with radius $r$ and center $\bc\in\mathbb{R}^N$ as 
$\cB(r,\bc,\boldsymbol{\Sigma}) := \{ \bx \in \mathbb{R}^N \:\vert\: (\bx - \bc)\rT \boldsymbol{\Sigma}^{-1} (\bx - \bc) \le r \},$ where $\boldsymbol{\Sigma}$ is a positive definite matrix that determines the shape of the ellipsoid. The volume of the ellipsoid follows $\mathrm{Vol}\,\big(\cB(r,\bc,\boldsymbol{\Sigma})\big) \propto r^{N/2} \, \sqrt{\det(\boldsymbol{\Sigma})}.$ Next, we provide a brief background of conformal inference, particularly for regression tasks.\vspace{2pt}

\noindent\textbf{Conformal Inference}. \ac{cp} is a distribution-free method to build a wrapper around a pre-trained model that forms prediction sets to guarantee finite-sample coverage. Given a base model (e.g., a classifier or regressor) and a non-conformity score function, \ac{cp} constructs a region $\cC(1-\alpha)$ that contains the true prediction with a user-specified probability \(1-\alpha\), i.e., for a feature $\bx$ and true label $\by$, we have the following conditional coverage guarantee
\[
\mathbb{P}\,\big(\by \in {\cC}(1-\alpha)\:\vert\: \bx\big) \geq 1-\alpha. 
\]
The size of the prediction set determines the \emph{inefficiency} of the model, with a smaller prediction set size indicating more confidence. The uncertainty prediction sets, as small as possible, are constructed to include all potential observations whose non-conformity scores are within the empirical quantiles of the calibration data, which is assumed to be exchangeable with the test data. Calibration data can be held out from the training set, or, in the case of sequential data, the training data can be repurposed with temporal windowing.\vspace{2pt}
% To do this, calibration data is typically required. Let $\{\bx_i, y_i\}_{i=1}^{Nc}$ be the calibration data. 
% The general procedure involves three steps: (1) computing non-conformity scores \(s_i = \cS(\bx_i, y_i)\) for calibration examples \((\bx_i,y_i)\), (2) estimating the empirical \((1-\alpha)\)-quantile of these scores, and (3) constructing a prediction set for a new test point by including all candidate labels whose non-conformity score does not exceed the quantile. Under the assumption that calibration and test examples are exchangeable and the non-conformity score is a symmetric score function, the resulting sets achieve marginal coverage guarantees~\cite{anastasios_cp}:  
% \begin{equation}
% \mathcal{P}\!\big(y_{n+1} \in \hat{\cC}(\bx_{n+1})\big) \geq 1-\alpha.    
% \end{equation}
% \todo[inline]{You should give references for CP.}
% {\color{red}
% why here, it is part of intro or related works: 
% For non-exchangeable data, such as time series and graph data, the standard split conformal prediction procedure does not guarantee coverage~\cite{tibshirani_cp_beyond_exchangeability, Conf_GNN}. Several extensions have been proposed to address this limitation.
% In the graph domain, \ac{confgnn}~\cite{Conf_GNN} and \ac{daps}~\cite{DAPS} leverage permutation invariance to extend conformal prediction to graph-structured inputs.
% Although these approaches achieve strong performance for either time series or graph data individually, existing work does not directly address settings where both temporal dependencies and graph structure are simultaneously present.}

\noindent\textbf{Graph Filters.} Consider an undirected graph $\cG = (\cV,\cE)$,  where $\cV$ is the set of $N$ nodes and $\cE\subseteq\cV\times\cV$ is the set of edges. The graph structure is captured by an $N \times N$ symmetric matrix having non-zero $(i,j)$-th entry whenever nodes $i$ and $j$ are connected. Examples of such matrices are the adjacency matrix $\mA \in \mathbb{R}_+^N$ and the normalized adjacency matrix $\mD^{-1}\mA \in \mathbb{R}_+^N$, to name a few. Here, $\mD$ is the diagonal nodal degree matrix. Note that the eigenvalues of $\mD^{-1}\mA$, denoted by $\lambda_i$, are bounded as $\lambda_i \in [0,1]$. Henceforth, we consider $\mD^{-1}\mA$ to represent $\cG.$

A \emph{graph convolutional filter} is defined as the matrix polynomial  %of the matrix $\mD^{-1}\mA$ and is defined by 
$\smash{\mH = \sum_{l=0}^{L-1}\tau_l(\mD^{-1}\mA)^l}$, where $\smash{\{\tau_l\}_{l=0}^{L-1}}$ denote the filter coefficients and $L$ is the filter order. In this paper, we adopt a simple first-order graph filter of the form $\mH = (1-\tau) \mI + \tau \mD^{-1}\mA$, with $\tau \in [0,1].$  Let $\be \in \mathbb{R}^N$ be a \emph{graph signal}, i.e., a signal supported in $\mathcal{V}$ where $e_i$ is the value at node $i\in\cV$.
%enumerated by the nodes of $\cG$. 
Then, we can diffuse $\be$ over the graph by computing $\mH \be$ via one-hop local aggregations.

\section{Problem Statement}\label{sec:prob_statement}

We consider a forecasting problem from graph time-series data. The graph $\cG$ is static, i.e., the graph structure (nodes and edges) remains fixed, but the graph signals (or nodal features) evolve over time. 

Let us denote the graph time-series data at time $t$ as $\cG_t = (\bx_t,\cG)$, which arrives sequentially for $t\geq 1$. Here, $\bx_t \in \mathbb{R}^{N}$ contains the graph signal and $\by_t \in \mathbb{R}^{N}$ is the prediction target. For instance, for a one-step ahead predictor, we use $\by_t = \bx_{t+1}.$ We are given a sequential neural model, denoted by $\cM$, whose goal is to predict $\by_t$ from $(\bx_t,\cG)$ at each $t$, i.e., it provides a \emph{point estimate} $\hat{\by}_t = \cM(\cG_t)$. For instance, $\cM$ could be a \ac{gnn-rnn} or \ac{gnnxmer}.
Given a user-specified miscoverage level $\alpha \in [0,1]$, the main goal of the paper is to estimate a prediction set ${\cC}_{t-1}(\alpha)$ that contains the true target $\by_t$ with probability of at least $1-\alpha$, that is
\[
\mathbb{P}(\by_t \in \cC_{t-1}(\alpha)) \geq 1-\alpha, \, \text{for each \,\,} t.
\]
This is referred to as marginal coverage. Specifically, given training data $\{\cG_t,\by_t\}_{t=1}^T$, the goal is to construct prediction regions $\cC_{t}(\alpha)$ sequentially for all $t \geq T+1$. In addition to guaranteeing coverage, it is crucial to construct efficient prediction regions (with smallest possible volume), and to do so, we exploit the graph structure.
% The prediction set $\cC_{t-1}(\alpha)$ is typically a function of training data up to time t, i.e., $\{\cG_t, \by_t\}$ 
% The main aim of this work is to estimate prediction sets ${C}_{t-1}(G_t, \alpha$ so that 

% Taking the first $T$ samples $\{G_t\}_{t=1}^T$ to be the training data, the goal is to construct prediction regions $\hat{C}_{t-1}(G_t, \alpha)$ sequentially for all $t \geq T+1$. The prediction regions are a function of past observations, current graph structure, features $G_t$, and a user-specified miscoverage level $\alpha \in [0,1]$.
% Given a desired $\alpha$, we want the prediction region to contain the true observations $\by_t$ with a probability of at least $1-\alpha$.
% Formally, this statement can be expressed as two types of coverage guarantees:
% The first equation represents the marginal coverage,
% \begin{equation}
%     \mathbb{P}(\by_t \in \hat{C}_{t-1}(G_t , \alpha )) \geq 1-\alpha, \forall t
% \end{equation}
%  while the second one represents the stronger conditional coverage.
%  \begin{equation}
%      \mathbb{P}(\by_t \in \hat{C}_{t-1}(G_t , \alpha ) | G_t) \geq 1-\alpha, \forall t
%  \end{equation}
% The first condition ensures coverage holds on average, while the second requires it to hold conditioned on the graph 
\begin{figure}
    \centering
    \includegraphics[width=0.8\columnwidth]{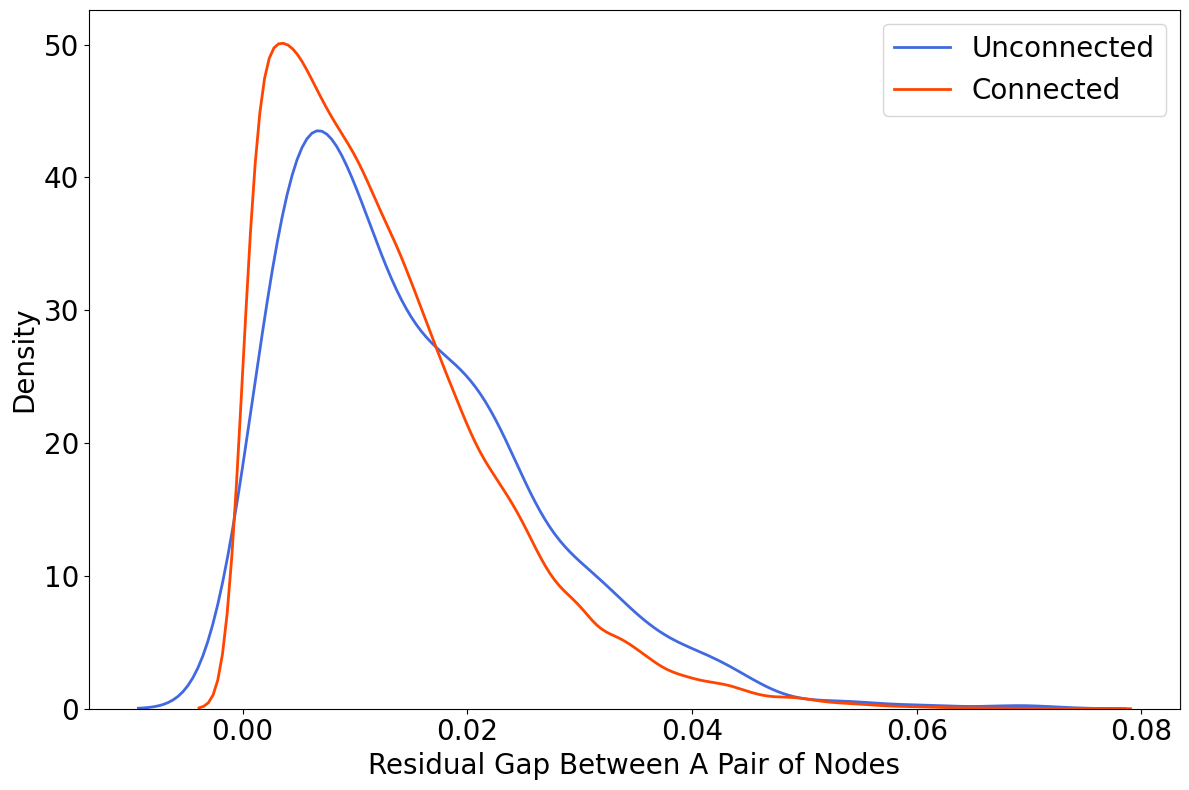}
    \caption{Residuals are smooth over the graph. Connected nodes have a smaller gap in the residuals than unconnected nodes. The plot is generated using prediction residuals from a model $\cM$ on the Wiki Maths dataset (see also Section \ref{sec:num_exp}), averaged across all timesteps.}
    \label{fig: correlation b/w residuals}
\end{figure}

\section{Graph-aware Ellipsoidal Prediction Sets}
In this section, we describe our construction of prediction sets guaranteeing marginal coverage and provable efficiency.
%smaller prediction set volume. The proposed method is based on
We were inspired by~\cite{Ellipsoidal_SPCI}, which we extend to the graph setting. In Section \ref{sec:theo_analysis} we show that, by exploiting the graph structure, the volume of the ellipsoid shrinks relative to a comparable graph-agnostic baseline.

\subsection{Graph-aware Nonconformity Score}
%First errors/residuals > diffusion (by this time, the diffusion graph filter, etc, should be defined) >  scores > scalar scores

Consider a pre-trained graph time-series prediction model $\cM$. Suppose we have $T$ training samples $\{\cG_t, \by_t\}$ for $t=1,\ldots,T$. We can compute the prediction residuals as
\begin{equation}
    \boldsymbol{\varepsilon}_t (\by_t) := \by_t - \cM(\cG_t), \quad t=1,\dots,T.
\end{equation}
Assuming homophily, i.e., prediction targets are smooth over the graph and the model exploits this property, the prediction residuals will also be smooth along the edges $\cE$. See Fig.~\ref{fig: correlation b/w residuals}, where we can see that residuals are smaller on connected nodes than on disconnected pairs (a similar observation for static graph data can be found in~\cite{Jia2020kdd}).
To capitalize on this favorable structure, we filter the residuals using a graph filter $\mH$. The filtered residuals are given by
\begin{equation}
    {\be}_t (\by_t)= \mH \boldsymbol{\varepsilon}_t = \left[(1-\tau) \mI + \tau \mD^{-1}\mA\right] \boldsymbol{\varepsilon}_t, \quad t=1,\dots,T.
\end{equation}

Consider the mean residual vectors $\bar{\boldsymbol{\varepsilon}}= \mathbb{E}[\boldsymbol{\varepsilon}_t]$ and $\bar{\be}= \mathbb{E}[\be_t]$. Accordingly, the covariance matrices are $\boldsymbol{\Sigma} = \mathbb{E}[(\boldsymbol{\varepsilon}_t-\bar{\boldsymbol{\varepsilon}})(\boldsymbol{\varepsilon}_t-\bar{\boldsymbol{\varepsilon}})\rT]$ and $\boldsymbol{\Sigma}_\cG = \mathbb{E}[(\be_t-\bar{\be})(\be_t-\bar{\be})\rT] = \mH \boldsymbol{\Sigma} \mH$. %Let us also define the mean vectors $\bar{\boldsymbol{\varepsilon}}= \mathbb{E}\{\boldsymbol{\varepsilon}_t\}$ and $\bar{\be}= \mathbb{E}\{\be\}$. 
We define a \emph{graph-aware} nonconformity score function using the (squared) Mahalanobis distance as~\cite{Ellipsoidal_SPCI}
% \[
% \psi_t(\by_t) = (\boldsymbol{\varepsilon}_t - \bar{\boldsymbol{\varepsilon}} )\rT\boldsymbol{\Sigma}^{-1} (\boldsymbol{\varepsilon}_t - \bar{\boldsymbol{\varepsilon}} )
% \]
% and the graph-aware non-conformity as
%
\begin{equation}\label{eq:scores}
    s_t(\by_t) := (\be_t - \bar{\be} )\rT\boldsymbol{\Sigma}_\cG^{-1} (\be_t - \bar{\be}).
\end{equation}
The scores $\{s_t\}$ can be used to compute an empirical $\alpha$-quantile radius of the ellipsoid that contains the fraction of residual vectors $\{\be_t\}$. In practice, the ensemble covariance matrices and mean vectors are replaced with their respective sample estimates.
% Given a residual matrix $\mathcal{E} \in \mathbb{R}^{|\cV| \times T}$, for a static temporal graph and a graph filter $\bH(\bS)$, the diffused residuals are defined as: 
% \begin{equation}
%     \bE = \bH(\bS) \mathcal{E}
% \end{equation}
% Using the diffused residuals, the covariance matrix is estimated as
% \begin{equation}
%    \hat\Sigma = \frac{1}{T-1}\sum_{t=1}^T (e_t - \bar{e})(e_t - \bar{e})^\top 
% \end{equation}
% where $\bar{e} = \tfrac{1}{T}\sum_{t=1}^T e_t$ is the sample mean. 
% Since $\hat\Sigma$ may be rank-deficient, a low-rank approximation $\hat\Sigma_\rho$ obtained from singular value decomposition, together with its pseudo-inverse $\hat\Sigma_\rho^{-1}$, is used to define the non-conformity score.
% For a new prediction, the graph-aware scalar non-conformity score  $s(Y)$ is given by 
% \begin{equation}
%   s(\by) = (e - \bar{e})^\top \hat\Sigma_\rho^{-1} (e - \bar{e}) \in \mathbb{R}  
% \end{equation}
% The proposed non-conformity score is a natural extension of the Mahalanobis distance to graph-diffused residuals, which quantifies how non-conforming a new residual vector $\hat{e}$ is, relative to the distribution of the past diffused residuals.

\subsection{Ellipsoidal Uncertainty Sets}\label{ssec:unc_sets}

To predict the quantiles of future unseen nonconformity scores (say, the next score $p_{t'} = s_{t'+1}$), we use quantile regression~\cite{quantile_regr} 
on a temporal window of non-conformity scores $\bs_{t'} = [s_{t'+w-1},\ldots, s_{t'}]\rT$ %to predict the next score $p_{t'} = s_{t'+1}$ 
obtained from training samples $t' = 1,\ldots,T-w$, for some window length $w$~\cite{SPCI}. That is, we fit a quantile regressor $Q_t(1-\alpha)$ using samples $\{\hat{p}_{t'} = Q_{t'}(1-\alpha),p_{t'}\}_{t'=1}^{T-w}$.  All in all, we have~\cite{Ellipsoidal_SPCI}
\begin{align}\label{eq:pred_set}
C_t(\alpha) &= \left\{\by_t \:\vert\: s_t(\by_t) \leq Q_t(1-\alpha)\right\}    \notag \\
&= \cM(\cG_t) + \cB(\sqrt{Q_t(1-\alpha)}, \bar{\be}, \boldsymbol{\Sigma}_\cG).
\end{align}
To implement the regression step, one can use off-the-shelf quantile regressors that optimize the pinball loss or quantile random forests.

\section{Theoretical Analysis}\label{sec:theo_analysis}

Here we offer theoretical results on coverage and volume shrinkage.

\subsection{Conditional Coverage}

% Let us define $\bn_t$ as the unknown additive noise to the true model $f$ so that $\by_t = f(\cG_t) + \bn_t$ and 
% %$\Delta_t = \boldsymbol{\varepsilon}_t - \bn_t$, 
% $\Delta_{\cG,t} = \be_t - \bn_t$. 
% Assuming that $\|\Delta_{\cG,t}\|^2 \leq \delta_T$, 

Our construction enjoys the same coverage guarantees as~\cite{Ellipsoidal_SPCI}, which we reproduce here in the interest of self-containment. 
\begin{theorem}~\cite[Corollary 4.14]{Ellipsoidal_SPCI}.
Assume the true covariance matrix $\boldsymbol{\Sigma}_\cG$ is known and positive definite with minimum eigenvalue at least $\lambda>0$, the filtered residuals 
$\{\ve_t\}$ are i.i.d. over time, and the CDF of the true nonconformity score is Lipschitz. Then, we have
\begin{equation}\tag{4}
\begin{split}
\left|\mathbb{P}\!\big(\mathbf{y}_{T+1}\in\mathcal{C}_{T+1}(\alpha)\mid \mathcal{G}_{T+1}\big)
-(1-\alpha)\right| \\
\le 12\sqrt{\frac{\log(16T)}{T}}
&+ L\left(\frac{\delta_T}{\sqrt{\lambda}}+\delta_T\right).
\end{split}
\end{equation}
where $T$ is the training data size, $\delta_T$ is the bound on the residual error, and $L$ depends on the Lipschitz constant.
\end{theorem}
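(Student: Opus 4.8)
The plan is to reduce the conditional coverage gap to a one-dimensional statement about the nonconformity score and then control it by a quantile-stability argument. First I would note that, by the definition of the prediction set in~\eqref{eq:pred_set}, the coverage event $\{\by_{T+1}\in\cC_{T+1}(\alpha)\}$ is \emph{exactly} the event $\{s_{T+1}(\by_{T+1})\le Q_{T+1}(1-\alpha)\}$, so the target quantity is $\mathbb{P}\big(s_{T+1}\le Q_{T+1}(1-\alpha)\mid\cG_{T+1}\big)$. Let $F$ denote the CDF of the true score $s_{T+1}$ and let $q^\ast$ be its true $(1-\alpha)$-quantile, i.e.\ $F(q^\ast)=1-\alpha$. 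Because the filtered residuals $\{\ve_t\}$ are assumed i.i.d.\ over time, the estimated threshold $Q_{T+1}(1-\alpha)$ is a measurable function of the past scores and is hence independent of $s_{T+1}$; conditioning on the available past information therefore gives $\mathbb{P}\big(s_{T+1}\le Q_{T+1}(1-\alpha)\mid\cG_{T+1}\big)=\mathbb{E}\,[F(Q_{T+1}(1-\alpha))\mid\cG_{T+1}]$.

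With this reduction, the Lipschitz hypothesis on $F$ yields $\big|F(Q_{T+1}(1-\alpha))-(1-\alpha)\big|=\big|F(Q_{T+1}(1-\alpha))-F(q^\ast)\big|\le L\,\big|Q_{T+1}(1-\alpha)-q^\ast\big|$, so the problem collapses to bounding the threshold error $|Q_{T+1}(1-\alpha)-q^\ast|$. I would split this error into a statistical part and a deterministic bias part. For the statistical part, I would compare the quantile-regression output against the empirical $(1-\alpha)$-quantile of the past scores and invoke a DKW-type uniform concentration inequality for the empirical CDF; under the i.i.d.\ assumption this produces the $12\sqrt{\log(16T)/T}$ term with its explicit constants.

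The bias part is where the minimum-eigenvalue factor enters, and this is the step I expect to be the main obstacle. Since the score is a squared Mahalanobis distance $s_t=(\be_t-\bar\be)\rT\boldsymbol{\Sigma}_\cG^{-1}(\be_t-\bar\be)$, any estimation error of size $\delta_T$ in the plug-in center $\bar\be$ propagates to the score through the whitening map $\boldsymbol{\Sigma}_\cG^{-1/2}$, whose operator norm is at most $1/\sqrt{\lambda}$ under the minimum-eigenvalue hypothesis; expanding the perturbed score, the dominant cross-term contributes a threshold deviation of order $\delta_T/\sqrt{\lambda}$. The residual $\delta_T$ term reflects the approximation error of the quantile regressor itself, which enters the threshold $Q_{T+1}(1-\alpha)$ additively and without metric rescaling. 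Combining the statistical and bias contributions through the triangle inequality and then applying the Lipschitz bound gives the stated inequality. The technical care here lies in verifying that the cross-term, and not the quadratic remainder, dominates, and in checking that the concentration step remains uniform over the random threshold.

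Finally, because the statement is a verbatim restatement of~\cite[Corollary 4.14]{Ellipsoidal_SPCI}, the cleanest route within the paper is to verify that the graph-filtered residuals $\ve_t=\mH\boldsymbol{\varepsilon}_t$ and their covariance $\boldsymbol{\Sigma}_\cG=\mH\boldsymbol{\Sigma}\mH$ meet the hypotheses of that corollary---i.i.d.\ over time, positive definite with minimum eigenvalue at least $\lambda$, and Lipschitz score CDF---and then invoke it directly. The decomposition above is simply the self-contained version of that argument, exposing where each term of the bound originates.
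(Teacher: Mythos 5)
Your final paragraph is exactly the paper's approach: the paper offers no proof of this theorem at all, importing it verbatim from \cite[Corollary 4.14]{Ellipsoidal_SPCI} as applied to the filtered residuals $\be_t=\mH\boldsymbol{\varepsilon}_t$ with covariance $\boldsymbol{\Sigma}_\cG=\mH\boldsymbol{\Sigma}\mH$, and merely remarks afterward that the graph enters only through the constants $\delta_T$ and $\lambda$. Your preceding self-contained sketch (reduction of the coverage gap to quantile stability of the score, the Lipschitz-CDF step, a DKW-type concentration argument for the $12\sqrt{\log(16T)/T}$ term, and the $1/\sqrt{\lambda}$ operator-norm bound on the whitening map producing the $\delta_T/\sqrt{\lambda}$ term) is a plausible reconstruction of the cited result's proof, but there is nothing in this paper to check it against; on the one point where comparison is possible --- verifying the hypotheses for the graph-filtered scores and invoking the corollary directly --- your route coincides with the paper's.
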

Notice that the effect of $\mathcal{G}$ is captured in the constants $\delta_T$ and $\lambda$.
% According to Assumption 4.3 from ~\cite{Ellipsoidal_SPCI}, there exists a  sequence ${\delta_T}_{T \geq 1}$ such that
% $\frac{1}{T} \sum_{t=1}^{T} \|\Delta_t\|^2 \leq \delta_T^2,
% \quad 
% \|\Delta_{T+1}\| \leq \delta_T 
% $
% Using $ \be_t = [(1-\tau)\bI + \tau \bD^-1 \bA]\boldsymbol{\varepsilon_t}$, we get
% % \begin{equation}
% % \|\Delta_{\cG,t}\| \leq \| (1-\tau) \bI + \tau \bD^{-1} \bA\| \|\Delta_t\| + |\tau| \|\bD^{-1}\bA - \bI\| \|n_t\|
% % \end{equation}
% \begin{equation}
% \|\Delta_{\cG,t}\| \leq \|\Delta_t\| + |\tau| \|\bD^{-1}\bA - \bI\| \|n_t\|
% \end{equation}

\subsection{Ellipsoid Volume Shrinkage}  

Let us define $Q_t'(1-\alpha)$ as the $(1-\alpha)$-quantile computed from the graph-agnostic scores $\psi_t(\by_t) := (\boldsymbol{\varepsilon}_t - \bar{\boldsymbol{\varepsilon}} )\rT\boldsymbol{\Sigma}^{-1} (\boldsymbol{\varepsilon}_t - \bar{\boldsymbol{\varepsilon}} )$ [cf. \eqref{eq:scores}], using the same windowing procedure described at the end of Section \ref{ssec:unc_sets}. For convinience, let $\cB_\cG := \cB(\sqrt{Q_t(1-\alpha)}, \bar{\be}, \boldsymbol{\Sigma}_\cG)$ and $\cB' := \cB(\sqrt{Q_t'(1-\alpha)}, \bar{\boldsymbol{\varepsilon}}, \boldsymbol{\Sigma})$ be the resulting graph-aware and graph-agnostic ellipsoids, respectively.

Under mild assumptions on the graph-agnostic and graph-aware quantiles, i.e., we want to approximately include the same fraction of points in %both cases in 
the respective ellipsoids, we show that the volume of the prediction sets in \eqref{eq:pred_set} shrinks exponentially in the filter coefficient. 
\begin{theorem}\label{th:vol_shrinkage} Suppose $\frac{Q_t(1-\alpha)}{Q_t'(1-\alpha)} \approx 1$. Then we have
\begin{equation*}
\mathrm{Vol}(\cB_\cG) \leq e^{-\eta \tau} \rm{Vol}(\cB'),
\end{equation*}
for some positive number $\eta$, where $\tau$ is the graph filter coefficient.% Further, $\vert Q_t(1-\alpha) - Q_t'(1-\alpha) \vert \leq \delta$, then we acheive volume shrinkage if $\tau > \frac{N\delta}{2mQ_t'(1-\alpha)}.$
\end{theorem}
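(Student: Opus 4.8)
The plan is to reduce the volume comparison to a single determinant inequality for the filter $\mH$, and then bound that determinant through its spectrum. First I would substitute both ellipsoids into the volume formula $\mathrm{Vol}(\cB(r,\bc,\boldsymbol{\Sigma}))\propto r^{N/2}\sqrt{\det(\boldsymbol{\Sigma})}$ from the preliminaries. Since $\cB_\cG$ and $\cB'$ carry radii $\sqrt{Q_t(1-\alpha)}$ and $\sqrt{Q_t'(1-\alpha)}$ respectively, the dimension-only proportionality constant cancels in the ratio and I am left with
\[
\frac{\mathrm{Vol}(\cB_\cG)}{\mathrm{Vol}(\cB')}=\left(\frac{Q_t(1-\alpha)}{Q_t'(1-\alpha)}\right)^{N/4}\sqrt{\frac{\det(\boldsymbol{\Sigma}_\cG)}{\det(\boldsymbol{\Sigma})}}.
\]

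The next step uses the structural identity $\boldsymbol{\Sigma}_\cG=\mH\boldsymbol{\Sigma}\mH$, which gives $\det(\boldsymbol{\Sigma}_\cG)=\det(\mH)^2\det(\boldsymbol{\Sigma})$, so the determinant factor collapses to $|\det(\mH)|$. Invoking the hypothesis $Q_t(1-\alpha)/Q_t'(1-\alpha)\approx 1$ then removes the radius factor, reducing the entire claim to showing $|\det(\mH)|\le e^{-\eta\tau}$.

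For the core estimate I would diagonalize $\mH=(1-\tau)\mI+\tau\mD^{-1}\mA$: being a degree-one polynomial in $\mD^{-1}\mA$, it shares its eigenvectors and has eigenvalues $\mu_i=1-\tau(1-\lambda_i)$, with $\lambda_i\in[0,1]$. Because $\tau\in[0,1]$ and $1-\lambda_i\in[0,1]$, each $\mu_i\in[0,1]$, so the determinant is the nonnegative product $\det(\mH)=\prod_{i=1}^N\mu_i$. Applying the elementary inequality $1-x\le e^{-x}$ factorwise yields $\det(\mH)\le e^{-\tau\sum_i(1-\lambda_i)}=e^{-\eta\tau}$ with $\eta:=\sum_{i=1}^N(1-\lambda_i)$. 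Chaining the three reductions produces the stated inequality $\mathrm{Vol}(\cB_\cG)\le e^{-\eta\tau}\,\mathrm{Vol}(\cB')$.

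The main obstacle I anticipate is not the algebra but certifying that $\eta>0$, since otherwise the bound degenerates to the trivial $\mathrm{Vol}(\cB_\cG)\le\mathrm{Vol}(\cB')$. This requires at least one eigenvalue with $\lambda_i<1$; since $\mD^{-1}\mA$ is row-stochastic, its Perron eigenvalue $1$ is simple for a connected $\cG$, so every remaining $\lambda_i<1$ and $\eta$ equals the summed spectral gap over the non-trivial modes, which is strictly positive. A secondary point requiring care is the reading of $Q_t(1-\alpha)/Q_t'(1-\alpha)\approx 1$: I would interpret it as both quantile regressors being calibrated to capture the same fraction of Mahalanobis-normalized residual mass, so that any residual discrepancy in the radius factor is a higher-order term to be absorbed into the approximation rather than propagated through the bound.
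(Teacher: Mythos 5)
Your proposal is correct and follows essentially the same route as the paper: reduce the volume ratio to $\det(\mH)$ via $\boldsymbol{\Sigma}_\cG = \mH\boldsymbol{\Sigma}\mH$ and the quantile hypothesis, then bound the spectrum of $\mH = (1-\tau)\mI + \tau\mD^{-1}\mA$ factorwise using $1-x \le e^{-x}$ (the paper's $\log(1-x)\le -x$), arriving at the same $\eta = \sum_{i=1}^N(1-\lambda_i)$. Your closing remark justifying $\eta>0$ via the simple Perron eigenvalue of the row-stochastic $\mD^{-1}\mA$ on a connected graph is a small but genuine improvement, since the paper asserts positivity without argument.
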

\begin{proof}
We can write the log ratio of the volumes of two ellipsoids as
\begin{equation}\label{eq:log_ratio}
\log \left[ \frac{\mathrm{vol}(\cB_\cG)}{\mathrm{vol}(\cB')} \right] 
=  \log \det(\bH) + \frac{N}{2} \log \left(\frac{Q_t(1-\alpha)}{Q_t'(1-\alpha)}\right). 
\end{equation}
When the quantiles before and after graph filtering are approximately the same, i.e., $\frac{Q_t(1-\alpha)}{Q_t'(1-\alpha)} \approx 1$, the second term in the right-hand-side of \eqref{eq:log_ratio} vanishes. Thus, to establish ellipsoid volume shrinkage we must show that $\det(\bH) < 1$. To that end, we have 
\begin{align*}
    \log \det(\bH) &= \log \det((1-\tau)\mI + \tau \mD^{-1}\mA) \notag \\
    &=  \sum\limits_{i=1}^N \log (1 - \tau (1- \lambda_i) ) \notag \\
    &\leq -\tau \sum\limits_{i=1}^N (1-\lambda_i)
\end{align*}
as $\log (1-x) \leq -x$. Since $\lambda_i \in [0,1]$, then $\log \det(\bH) \leq -\tau \eta$ or equivalently $\det(\bH) \leq e^{-\eta\tau}$, where $\smash{\eta := \sum_{i=1}^N (1-\lambda_i) >0.}$
\end{proof}
\begin{table}[t]
\centering
\begin{tabular}{l c c}
\toprule
\textbf{Datasets} & nodes & timestamps \\
\midrule
Chickenpox Hungary &  20 & 513\\
\midrule
MontevideoBus & 675 & 740\\
\midrule
Wiki Maths & 1068  & 717\\
\bottomrule
\end{tabular}
\caption{Dataset statistics.}
\label{tab:datasets}
\end{table}
In a nutshell, Theorem \ref{th:vol_shrinkage} asserts that the volume of the graph-aware ellipsoid $\cB_\cG$ shrinks exponentially with the filter coefficient $\tau$ and the spectrum of the underlying graph $\eta$, wherein $\tau$ can be appropriately designed to get a minimum volume ellipsoid.

\begin{table*}[h]
\centering
\begin{tabular}{l l c c c}
\toprule
\textbf{Method} & \textbf{Metric} & \textbf{Wiki Maths} & \textbf{MontevideoBus} & \textbf{Chickenpox Hungary} \\
\midrule
%\vspace{4mm}
\multirow{2}{*}{\textbf{Graph-agnostic}~\cite{Ellipsoidal_SPCI}} 
    & Coverage & $0.903 \pm 0.006$ & $0.91 \pm 0.0122$ & $0.89 \pm 0.0167$\\
    & Volume   & $5.19\times10^{3} \pm 2010.298$ & $3.09\times10^{3} \pm 247.325$ & $2.74\times10^{2} \pm 14.842$\\
\midrule
\vspace{1mm}
\multirow{2}{*}{\textbf{Graph-aware (proposed)}}
    & Coverage & $0.897 \pm 0.010$ & $0.912 \pm 0.008$ & $0.89 \pm 0.018$ \\
    & Volume   & $\mathbf{1.46\times10^{3} \pm 135.769}$ & $\mathbf{1.56\times10^{3} \pm 880.327}$ & $\mathbf{1.25\times10^{2} \pm 70.851}$ \\
\bottomrule
\end{tabular}
\caption{Empirical coverage and inefficiency for one-step-ahead prediction, for $\alpha = 0.1$ and window length $w=10$. Results are averaged over five runs with a train/test split of $0.7$.}
\label{tab:results_alpha0.1}
\end{table*}
\begin{table*}[h]
\centering
\begin{tabular}{l l c c c}
\toprule
\textbf{Method} & \textbf{Metric} & \textbf{Wiki Maths} & \textbf{MontevideoBus} & \textbf{Chickenpox Hungary} \\
\midrule
%\vspace{4mm}
\multirow{2}{*}{\textbf{Graph-agnostic}~\cite{Ellipsoidal_SPCI}} 
    & Coverage & $0.954 \pm 0.005$ & $0.948 \pm 0.004$ & $0.916 \pm 0.011$ \\
    & Volume   & $8.51\times10^{3} \pm 1458.303$ & $1.406\times10^{4} \pm 205.985$ & $1.6\times10^{2} \pm 14.153$ \\
\midrule
%\vspace{4mm}
\multirow{2}{*}{\textbf{Graph-aware (proposed)}} 
     & Coverage & $0.952 \pm 0.004$ & $0.952 \pm 0.008$ & $0.924 \pm 0.005$ \\
    & Volume   & $\bm{2.04\times10^{3} \pm 238.579}$ & $\bm{2.7\times10^{3}  \pm 112.472}$ & $\bm{1.29\times10^{2} \pm 20.144}$ \\
\bottomrule
\end{tabular}
\caption{Empirical coverage and inefficiency for one-step-ahead prediction, for $\alpha = 0.05$, window length $w=10$. Results are averaged over five runs with a train/test split of 0.7.}
\label{tab:results_alpha0.05}
\end{table*}
\begin{table*}[!htbp]
\centering
\begin{adjustbox}{max width=\textwidth}
\begin{tabular}{l
    r @{ $\pm$ } l r @{ $\pm$ } l
    r @{ $\pm$ } l r @{ $\pm$ } l
    r @{ $\pm$ } l r @{ $\pm$ } l}
\toprule
\multicolumn{1}{l}{\textbf{Wiki Maths Dataset}} 
  & \multicolumn{4}{c}{$r= 1$}
  & \multicolumn{4}{c}{$r= 5$} 
  & \multicolumn{4}{c}{$r= 10$} \\
\cmidrule(lr){2-5}\cmidrule(lr){6-9}\cmidrule(lr){10-13}
\textbf{Method} 
  & \multicolumn{2}{c}{Coverage} & \multicolumn{2}{c}{Volume} 
  & \multicolumn{2}{c}{Coverage} & \multicolumn{2}{c}{Volume} 
  & \multicolumn{2}{c}{Coverage} & \multicolumn{2}{c}{Volume} \\
\midrule
\textbf{Graph-agnostic}~\cite{Ellipsoidal_SPCI} 
  & 0.903 & 0.006  & $5.19\times10^{3}$ & 2010.29 
  & 0.889 & 0.012  & $1.20\times10^{4}$ & 754.91  
  & 0.875 & 0.008  & $9.40\times10^{3}$ & 5218.74 \\
\textbf{Graph-aware (proposed)}  
  & 0.897 & 0.010  & $\bm{1.46\times10^{3}}$ & \bm{135.77} 
  & 0.885 & 0.0129 & $\bm{2.40\times10^{3}}$ & \bm{694.09}  
  & 0.867 & 0.002  & $\bm{3.50\times10^{3}}$ & \bm{595.01} \\
\bottomrule
\end{tabular}
\end{adjustbox}
\caption{Coverage and volume for the Wiki Maths dataset, for different multi-step prediction settings. Target coverage is $1-\alpha = 0.9$.}
\label{tab:multi-step}
\end{table*}

\begin{table*}[!htbp]
\centering
\begin{adjustbox}{max width=\textwidth}
\begin{tabular}{l
    r @{ $\pm$ } l r @{ $\pm$ } l
    r @{ $\pm$ } l r @{ $\pm$ } l
    r @{ $\pm$ } l r @{ $\pm$ } l}
\toprule
\multicolumn{1}{l}{\textbf{Wiki Maths Dataset}} 
  & \multicolumn{4}{c}{$w=10$}
  & \multicolumn{4}{c}{$w=50$} 
  & \multicolumn{4}{c}{$w=100$} \\
\cmidrule(lr){2-5}\cmidrule(lr){6-9}\cmidrule(lr){10-13}
\textbf{Method} 
  & \multicolumn{2}{c}{Coverage} & \multicolumn{2}{c}{Volume} 
  & \multicolumn{2}{c}{Coverage} & \multicolumn{2}{c}{Volume} 
  & \multicolumn{2}{c}{Coverage} & \multicolumn{2}{c}{Volume} \\
\midrule
\textbf{Graph-agnostic}~\cite{Ellipsoidal_SPCI} &0.903 & .006 & $5.19\times10^{3}$ & 2010.298 & 0.892 & 0.005 & $4.80\times10^{3}$ & 1135.54  & 0.885 & 0.014 & $2.85\times10^{3}$ & 1260.33\\
\textbf{Graph-aware (proposed)}  &0.897& 0.01 & $\bm{1.46\times10^{3}}$& \bm{135.769} & 0.896 & 0.009 & $\bm{1.27\times10^{3}}$ & \bm{182.67}  & 0.886 & 0.002 & $\bm{1.22\times10^{3}}$ & \bm{345.89} \\
\bottomrule
\end{tabular}\end{adjustbox}
\caption{Coverage and volume for the Wiki Maths dataset, for three different window lengths $w$. Target coverage is $1-\alpha = 0.9$.}
\label{tab: different window lengths}
\end{table*}

\section{Numerical Experiments}\label{sec:num_exp}
%%%%%%%%%%%%%%%%%%%%%%%%%%%%%
%%       tables            %%
%%%%%%%%%%%%%%%%%%%%%%%%%%%%%

We conduct comprehensive numerical experiments on real-world datasets to demonstrate the advantages of employing the novel \ac{cp} method over its graph-agnostic counterpart~\cite{Ellipsoidal_SPCI}, particularly with regard to efficiency improvements. Furthermore, we demonstrate that the proposed graph-aware approach achieves the desired coverage guarantees with a smaller ellipsoid prediction set volume.\vspace{2pt}

\noindent\textbf{Datasets.} We evaluate the performance of our method on three real-world graph time-series datasets~\cite{datasets_pytorch}. These datasets span diverse network types and tasks, including traffic forecasting (MontevideoBus), web traffic prediction (Wiki Maths), and epidemic modeling (Chickenpox Hungary). Additional dataset statistics are summarized in Table \ref{tab:datasets}. \vspace{2pt}
%
% \todo[inline]{give citations to the dataset source/paper}

\noindent\textbf{Graph Time Series Prediction Model.}
For the Wiki Maths and MontevideoBus datasets we employed the \ac{dcrnn}~\cite{DCRNN}, while for the Chickenpox Hungary dataset, a \ac{gconvgru}~\cite{GConvGRU} was used to obtain the point predictions. To fully utilize the training data, we trained 15 bootstrap models.\vspace{2pt}

\noindent\textbf{Results and Discussion.} To systematically evaluate robustness across different levels of uncertainty and forecasting horizons, we perform the following experiments: 
\begin{enumerate}
    \item \textit{Coverage for different significance level:} We consider $\alpha \in \{0.05, 0.1\}$ to study the trade-off between coverage and ellipsoid volume. The results in Tables~\ref{tab:results_alpha0.1} and~\ref{tab:results_alpha0.05} show that the proposed method maintains coverage across different confidence levels. We observe that the prediction set volumes from the proposed method are smaller in both cases, but the ellipsoid prediction regions become larger as the significance level decreases, as expected.
    \item \textit{Multiple window lengths:} Recall that the window length $w$ is the number of past residuals used to predict the quantile of the future residual. We use $w \in \{10, 50, 100\}$ to analyze how sensitive the method is to temporal context. From Table~\ref{tab: different window lengths}, we observe that coverage stays similar with increasing window lengths, while the volume decreases as $w$ increases. This behavior can be attributed to the quantile predictor having access to more data and the fact that the residuals evolve smoothly over time without abrupt changes.
    \item \textit{One-step-ahead vs. multi-step-ahead prediction:} Multi-step prediction refers to predicting multiple future values beyond the immediate next step. Given past observations $\{\by_t\}_{t=1}^T$, an $r$-step predictor estimates $\by_{T+r}$ for $r > 1$. In practice, the models provide multiple future predictions at once, i.e., for different values of $r$. Results in Table \ref{tab:multi-step} indicate that coverage dimishes as the number $r$ of future steps to be predicted increases. 
\end{enumerate}
In summary, the proposed \ac{cp} method achieves the desired coverage for one-step-ahead prediction for both $\alpha = 0.05$ and $\alpha = 0.1$, while yielding substantially smaller ellipsoid volumes, up to 80\%\ reduction compared to the graph-agnostic baseline~\cite{Ellipsoidal_SPCI}. Our approach consistently outperforms the graph-agnostic counterpart across different window lengths, although the relative gain diminishes as $w$ increases. In the case of multi-step-ahead prediction, we observe a dip in coverage, indicating the need for further methodological improvements to address longer forecasting horizons.

%%%%%%%%%%%%%%%%%%%%%%%%%%%%%%%%%%%%%
%%       conclusion                %%
%%%%%%%%%%%%%%%%%%%%%%%%%%%%%%%%%%%%%

\section{Conclusions}
In this work, we have developed a conformal inference framework for graph time-series forecasting. Given a base predictive model (for instance a \ac{gnn-rnn}), our sequential framework produces efficient ellipsoids that can be used for uncertainty quantification with guaranteed user-specified coverage. In order to account for the graph structure in the nonconformity scores, we exploit the homophilic nature of the residuals via (low-pass) graph convolutional filtering. This mechanism results in provably exponentially smaller (hence, efficient) uncertainty quantifying ellipsoids relative to their graph-agnostic counterparts, while guaranteeing the desired coverage. Specifically, we show that the volume of the ellipsoid shrinks exponentially with the filter coefficient and spectrum of the underlying graph. We conducted experiments on several real-world graph time-series datasets in order to corroborate the benefits of incorporating the graph structure in the proposed nonconformity scores. Since here we compute the prediction regions from scratch at each time instant, it would be an interesting future research direction to adapt the ellipsoids through a recursive update procedure.

%% Needed for ICASSP and similar conferences that allow a 5th page only for references

\bibliographystyle{IEEEbib}
\bibliography{references}

\end{document}